\newcommand{\numClients}{\ensuremath{M}}
\newcommand{\numItems}{\ensuremath{N}}
\newcommand{\embsz}{\ensuremath{d}}
\newcommand{\lorarank}{\ensuremath{r}}
\newcommand{\localStep}{\tau}
\newcommand{\activeClients}{\mathcal{S}}
\newcommand{\sgrad}{\nabla}
\newcommand{\lr}{\eta}
\newcommand{\slr}{\lr_{s}}
\newcommand{\usr}{u}
\newcommand{\itm}{i}
\newcommand{\fullname}{Correlated Low-rank Structure update}
\newcommand{\arcronym}{CoLR}
\newcommand{\serveropt}{\textsc{ServerOpt}\xspace}
\newcommand{\clientopt}{\textsc{ClientOpt}\xspace}
\newcommand{\revise}[1]{#1}
  \providecommand\BibTeX{{%
    \normalfont B\kern-0.5em{\scshape i\kern-0.25em b}\kern-0.8em\TeX}}}
\begin{document}
\title{Towards Efficient Communication and Secure Federated Recommendation System via Low-rank Training}

\author{Ngoc-Hieu Nguyen}
\email{ngochieutb13@gmail.com}
\orcid{0009-0009-5873-6532}
\affiliation{%
  \institution{College of Engineering \& Computer Science, VinUniversity}
  \city{Hanoi}
  \country{Vietnam}
}

\author{Tuan-Anh Nguyen}
\email{21anh.nt@vinuni.edu.vn}
\affiliation{%
  \institution{College of Engineering \& Computer Science, VinUniversity}
  \city{Hanoi}
  \country{Vietnam}
}

\author{Tuan Nguyen}
\email{tuan.nm@vinuni.edu.vn}
\affiliation{%
  \institution{College of Engineering \& Computer Science, VinUniversity}
  \city{Hanoi}
  \country{Vietnam}
}

\author{Vu Tien Hoang}
\email{vu.ht@vinuni.edu.vn}
\affiliation{%
  \institution{College of Engineering \& Computer Science, VinUniversity}
  \city{Hanoi}
  \country{Vietnam}
}

\author{Dung D. Le}
\authornote{Co-last author}
\email{dung.ld@vinuni.edu.vn}
\affiliation{%
  \institution{College of Engineering \& Computer Science, VinUniversity}
  \city{Hanoi}
  \country{Vietnam}
}

\author{Kok-Seng Wong}
\authornotemark[1]
\email{wong.ks@vinuni.edu.vn}
\affiliation{%
  \institution{College of Engineering \& Computer Science, VinUniversity}
  \city{Hanoi}
  \country{Vietnam}
}

\renewcommand{\shortauthors}{Ngoc-Hieu, et al.}

\begin{abstract}
Federated Recommendation (FedRec) systems have emerged as a solution to safeguard users' data in response to growing regulatory concerns.
However, one of the major challenges in these systems lies in the communication costs 
that arise from the need to transmit neural network models between user devices and a central server.
Prior approaches to these challenges often lead to issues such as computational overheads, model specificity constraints, and compatibility issues with secure aggregation protocols. In response, we propose a novel framework, called Correlated Low-rank Structure (\arcronym), which leverages the concept of adjusting lightweight trainable parameters while keeping most parameters frozen. Our approach substantially reduces communication overheads without introducing additional computational burdens. Critically, our framework remains fully compatible with secure aggregation protocols, including the robust use of Homomorphic Encryption. 
The approach resulted in a reduction of up to 93.75\% in payload size, with only an approximate 8\% decrease in recommendation performance across datasets. Code for reproducing our experiments can be found at https://github.com/NNHieu/CoLR-FedRec.
\end{abstract}

%
%
\begin{CCSXML}
<ccs2012>
   <concept>
       <concept_id>10002951.10003227.10003351.10003269</concept_id>
       <concept_desc>Information systems~Collaborative filtering</concept_desc>
       <concept_significance>500</concept_significance>
       </concept>
   <concept>
       <concept_id>10002978.10003029.10011150</concept_id>
       <concept_desc>Security and privacy~Privacy protections</concept_desc>
       <concept_significance>500</concept_significance>
       </concept>
 </ccs2012>
\end{CCSXML}

\ccsdesc[500]{Information systems~Collaborative filtering}
\ccsdesc[500]{Security and privacy~Privacy protections}

\keywords{Recommendation System, Federated Learning, Communication efficiency}



\maketitle
\section{Introduction}
In a centralized recommendation system, all user behavior data is collected on a central server for training. However, this method can potentially expose private information that users may be hesitant to share with others. As a result, various regulations such as the General Data Protection Regulation (GDPR)\citep{voigt2017eu} and the California Consumer Privacy Act (CCPA)\citep{pardau2018california} have been implemented to limit the centralized collection of users' personal data. In response to this challenge, and in light of the increasing prevalence of edge devices, federated recommendation (FedRec) systems have gained significant attention for their ability to uphold user privacy \cite{ammad2019federated, fedrec_2020, chai2020secure, lin2020meta, hegedhu2020, Yang2020, privrec_2021, wang2021poirec, wu2022federated, perifanis2022federated, liu2022fairfedrec}. 

The training of FedRec systems is often in a cross-device setting which involves transferring recommendation models between a central server and numerous edge devices, such as mobile phones, laptops, and PCs.
It is increasingly challenging to transfer these models due to the growing model complexity and parameters in modern recommendation systems \citep{Liao2016clustering, naumov2019deep, yi-etal-2021-efficient}.
In addition, clients participating in FedRec systems often exhibit differences in their computational processing speeds and communication bandwidth capabilities, primarily stemming from variations in their hardware and infrastructure \cite{Li2020flchallenges}. 
These discrepancies can give rise to stragglers and decrease the number of participants involved in training, potentially leading to diminished system performance.

Practical FedRec systems require the implementation of mechanisms that reduce the amount of communication costs. Three commonly used approaches to reduce communication costs include (i) reducing the frequency of communication by allowing local updates, (ii) minimizing the size of the message through message compression, and (iii) reducing the server-side communication traffic by restricting the number of participating clients per round \citep{wang2021fieldguide}. Importantly, these three methods are independent and can be combined for enhanced efficiency.

In this study, we address the challenge of communication efficiency in federated recommendations by introducing an alternative to compression methods. Many existing compression methods involve encoding and decoding steps that can introduce significant delays, potentially outweighing the gains achieved in per-bit communication time \cite{vkj2019powersgd}.
Another crucial consideration is the compatibility with aggregation protocols. For example, compression techniques that do not align with all-reduce aggregation may yield reduced communication efficiency in systems employing these aggregation techniques \citep{vkj2019powersgd}.
This is also necessary for many secure aggregation protocols such as Homomorphic Encryption (HE) \citep{bonawitz2017practicalsecure,acar2018surveyhe}.
Moreover, many algorithms assume that clients have the same computational power, but this may induce stragglers due to computational heterogeneity and can increase the runtime of algorithms.

Based on our observation that the update transferred between clients and the central server in FedRec systems has a low-rank bias (Section \ref{sec:motivation}), we propose \fullname~(\arcronym).
\arcronym\;increases communication efficiency by adjusting lightweight trainable parameters while keeping most parameters frozen. Under this training scheme, only a small amount of trainable parameters will be shared between the server and clients.
Compared with other compression techniques, our methods offer the following benefits. (i) \textbf{Reduce both up-link and down-link communication cost:} \arcronym~avoid the need of unrolling the low-rank message in the aggregation step by using a correlated projection, (ii) \textbf{Low computational overheads:} Our method enforces a low-rank structure in the local update during the local optimization stage so eliminates the need to perform a compression step. Moreover, \arcronym~can be integrated into common aggregation methods such as FedAvg and does not require additional computation. (iii) \textbf{Compatible with secure aggregation protocols: } the aggregation step on \arcronym~can be carried by simple additive operations, this simplicity makes it compatible with strong secure aggregation methods such as HE, (iv) \textbf{Bandwidth heterogeneity awareness:} Allowing adaptive rank for clients based on computational/communication budget. Our framework demonstrates a capability to provide a strong foundation for building a secure and practical recommendation system.

Our contributions can be summarized as following:

\begin{itemize}
    \item We propose a novel framework, \arcronym, designed to tackle the communication challenge in training FedRec systems. 
    \item We conducted experiments to showcase the effectiveness of \arcronym. Notably, even with an update size equates to 6.25\% of the baseline model, \arcronym\;demonstrates remarkable efficiency by retaining 93.65\% accuracy (in terms of HR) compared to the much larger baseline.
    \item We show that CoLR is compatible with HE-based FedRec systems and, hence, reinforces the security of the overall recommendation systems.
\end{itemize}

\section{Related Work}
\paragraph{Federated Recommendation (FedRec) Systems. } 
In recent years, FedRec systems have risen to prominence as a key area of research in both machine learning and recommendation systems. FCF~\cite{ammad2019federated} and FedRec~\cite{fedrec_2020} are the pioneering FL-based methods for collaborative filtering based on matrix factorization. The former is designed for implicit feedback, while the latter is for explicit feedback.
To enhance user privacy, FedMF~\cite{chai2020secure} applies distributed matrix factorization within the FL framework and introduces the HE technique for securing gradients before they are transmitted to the server.
MetaMF~\cite{lin2020meta} is a distributed matrix factorization framework using a meta-network to generate rating prediction models and private item embedding. 
~\cite{wu2022federated} presents FedPerGNN, where each user maintains a GNN model to incorporate high-order user-item information.
FedNCF~\cite{perifanis2022federated} adapts Neural Collaborative Filtering (NCF)~\cite{He2017ncf} to the federated setting, incorporating neural networks to learn user-item interaction functions and thus enhancing the model's learning capabilities. 

\paragraph{Communication Efficient Federated Recommendation.}
Communication efficiency is of the utmost importance in FL \cite{jakub2016strategies}. 
JointRec \cite{duan2020joinrec} reduces uplink costs in FedRS by using low-rank matrix factorization and 8-bit probabilistic quantization to compress weight updates.
Some works explore reducing the entire item latent matrix payload by meta-learning techniques \cite{lin2020meta, privrec_2021}. 
LightFR~\cite{zhang2023lightfr} proposes a framework to reduce communication costs by exploiting the learning-to-hash technique under federated settings and enjoys both fast online inference and reduced memory consumption. Another solution is proposed by \citet{Khan2021apayload}, which is a multi-arm bandit algorithm to address item-dependent payloads.

\paragraph{Low-rank Structured Update}
\citet{jakub2016strategies} propose to enforce every update to local model $\Delta_\usr$ to have a low rank structure by express $\Delta_\usr = A_\usr^{(t)} B_\usr^{(t)}$ where $A_\usr^{(t)} \in \mathbb{R}^{d_1 \times k}$ and $B_\usr^{(t)} \in \mathbb{R}^{k \times d_2}$. In subsequent computation, $A_\usr^{(t)}$ is generated independently for each client and frozen during local training procedures.
This approach saves a factor of $d_1 / k$. 
\citet{hyeon-woo2022fedpara} proposes a method that re-parameterizes weight parameters of layers using low-rank weights followed by the Hadamard product. The authors show that FedPara can achieve comparable performance to the original model with 3 to 10 times lower communication costs on various tasks, such as image classification, and natural language processing.

\paragraph{Secure FedRec.}
Sending updates directly to the server without implementing privacy-preserving mechanisms can lead to security vulnerabilities.~\citet{chai2020secure} demonstrated that in the case of the Matrix Factorization (MF) model using the FedAvg learning algorithm, if adversaries gain access to a user's gradients in two consecutive steps, they can deduce the user's rating information. 
One approach involves leveraging HE to encrypt intermediate parameters before transmitting them to the server \cite{chai2020secure, perifanis2023}. This method effectively safeguards user ratings while maintaining recommendation accuracy. However, it introduces significant computational overhead, including encryption and decryption steps on the client side, as well as aggregation on the server side. Approximately 95\% of the time consumed by the system is dedicated to operators carried out on the ciphertext ~\cite{chai2020secure}.
\citet{Liang2021fedrec++} aim to enhance the performance of FedRec systems using denoising clients.
\citet{liu2021fedct} discuss the development of secure recommendation systems in cross-domain settings.
Recent studies~\cite{wu2022fedattack, zhang2022pipattack, yuan2023manipulating} show that FedRecs are susceptible to poisoning attacks of malicious clients. 

\section{Preliminaries}
In this section, we present the preliminaries and the setting that the paper is working with. Also, this part will discuss the challenges in applying compression methods.

\subsection{Federated Learning for Recommendation}
In the typical settings of item-based FedRec systems~\cite{fedrec_2020}, there are $\numClients$ users and $\numItems$ items where
each user $\usr$ has a private interaction set denoted as $O_\usr = \{(\itm, r_{\itm\usr})\} \subset [\numItems] \times \mathbb{R}$. 
These users want to jointly build a recommendation system based on local computations without violating participants' privacy.
This scenario naturally aligns with the horizontal federated setting~\cite{McMahan2016fedavg}, as it allows us to treat each user as an active participant. 
In this work, we also use the terms user and client interchangeably. 
The primary goal of such a system is to generate a ranked list of top-K items that a given user has not interacted with and are relevant to the user's preferences.
Mathematically, we can formalize the problem as finding a global model parameterized by $\theta$ that minimizes the following global loss function $\mathcal{L}(\cdot)$:
\begin{equation}
    \mathcal{L}(\theta) \triangleq 
     \sum_{\usr=1}^M w_\usr \mathcal{L}_\usr(\boldsymbol{\theta})
\end{equation}
where $\theta$ is the global parameter, $w_\usr$ is the relative weight of user $\usr$.
And $\mathcal{L}_\usr(\theta) := \sum_{(\itm, r_{\usr\itm}) \in O_{\usr}} \ell_\usr (\theta, (\itm, r_{\usr\itm}))$ is the local loss function at user $\usr$'s device. 
Here $(\itm, r_{\usr\itm})$ represents a data sample from the user's private dataset, and $\ell_\usr$ is the loss function defined by the learning algorithm.
Setting $w_\usr = N_\usr / N$ where $N_\usr = |O_\usr|$ and $N = \sum_{\usr=1}^M N_\usr$ makes the objective function $\mathcal{L}(\theta)$ equivalent to the empirical risk minimization objective function of the union of all the users' dataset. Once the global model is learned, it can be used for user prediction tasks.

In terms of learning algorithms, Federated Averaging (FedAvg) \citep{McMahan2016fedavg} is one of the most popular algorithms in FL. FedAvg divides the training process into rounds. At the beginning of the $t$-th round ($t \ge 0$), the server broadcasts the current global model $\theta^{(t)}$ to a subset of users $\mathcal{S}^{(t)}$ which is often uniformly sampled without replacement in simulation \cite{wang2021fieldguide, fedrec_2020}. Then each sampled client in the round's cohort performs $\tau_\usr$ local SGD updates on its local dataset and sends the local model changes $\Delta_\usr^{(t)} = \theta_\usr^{(t, \tau_\usr)} - \theta^{(t)}$ to the server. Finally, the server performs an aggregation step to update the global model:
\begin{equation}
    \theta^{(t + 1)} = \theta^{(t)} + \frac{\sum_{\usr \in \mathcal{S}^{(t)}} w_\usr \Delta_\usr^{(t)}}{\sum_{\usr \in \mathcal{S}^{(t)}} w_\usr}
\end{equation}
The above procedure will repeat until the algorithm converges.

\subsection{Limitation of current compression methods}
Communication is one of the main bottlenecks in FedRec systems and can be a serious constraint for both servers and clients. 
Although diverse optimization techniques exist to enhance communication efficiency, such methods may not preserve privacy.
Moreover, tackling privacy and communication efficiency as separate concerns can result in suboptimal solutions.

\paragraph{Top-K compression}   
This method is based on sparsification, which represents updates as sparse matrices to reduce the transfer size.
However, the process of allocating memory for copying the gradient (which can grow to a large size, often in the millions) and then sorting this copied data to identify the top-K threshold during each iteration is costly enough that it negates any potential enhancements in overall training time when applied to real-world systems. As a result, employing these gradient compression methods in their simplest form does not yield the expected improvements in training efficiency. As observed in \citet{gupta2021largescale}, employing the Top-K compression for training large-scale recommendation models takes 11\% more time than the baseline with no compression. 

\paragraph{SVD compression}
This method returns a compressed update with a low-rank structure, which is based on singular value decomposition. 
After obtaining factorization results $U_\usr$ and $V_\usr$, the aggregation step requires performing decompression and computing $\sum_{\usr \in \activeClients} \frac{N_u}{N} U_\usr V_\usr$ and this sum is not necessarily low-rank so there is no readily reducing cost in the downlink communication without additional compression-decompression step.
The need to perform matrix multiplication makes this method incompatible with HE. 
Moreover, performing SVD decomposition on an encrypted matrix by known schemes remains an open problem.

\section{Proposed method}
\begin{figure*}
    \includegraphics[width=1.66\columnwidth]{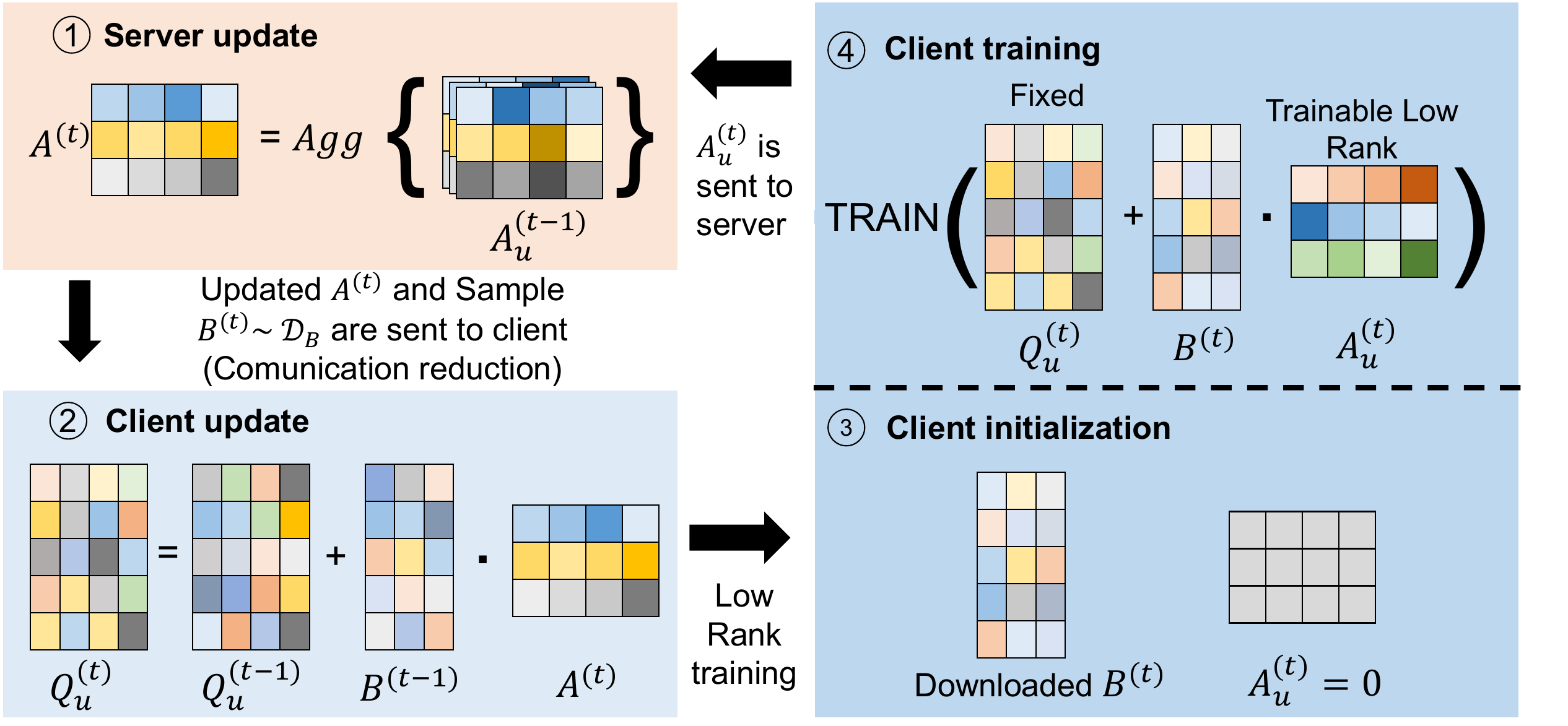}
    \caption{Illustration of CoLR at training round $t$.  At first, the server conducts aggregation over the local model $A_\usr^{(t-1, \tau_\usr)}$ to obtain the global model update $A^{(t)}$. Subsequently, $A^{(t)}$ are transmitted to the clients. The client will update their $Q_\usr^{(t)}$  using this $A^{(t)}$, then initilizes a new matrix  $A_\usr^{(t,0)}$ and download the matrix $B^{(t)}$ which is sampled at the server and shared between clients. Finally, the client carries out local training and then sends the local model update $A_\usr^{(t, \tau_\usr)}$ to the server for the next training round.}
    \label{fig:main-method}
\end{figure*}

\subsection{Motivation}
\label{sec:motivation}

Our method is motivated by analyzing the optimization process at each user's local device.
We consider an effective federated matrix factorization (FedMF) as the backbone model. 
This model represents each item and user by a vector with the size of $\embsz$ denoted $\mathbf{q}_\itm$ and $\mathbf{p}_\usr$ respectively. And the predicted ratings $r_{\usr\itm}$ are given by $\hat{r}_{\usr\itm} = \mathbf{q}_\itm^\top \mathbf{p}_\usr$. 
Then the user-wise local parameter $\theta_\usr$ consists of the user $\usr$'s embedding $\mathbf{p}_\usr$  and the item embedding matrix $Q$, where $\mathbf{q}_\itm$ is the $\itm$th column of $Q$. 
The loss function $\mathcal{L}_\usr$ at user $\usr$'s device is given in the following.
\begin{align*}
    \mathcal{L}_{\usr}(\mathbf{p}_\usr, Q) 
    =& \sum_{(\itm, r_{\usr\itm})\in O_u} \ell \left(r_{\usr\itm}, ({Q^\top\mathbf{p}_u})_{\itm}\right) + \frac{\lambda}{2} \|\mathbf{p}_u\|_2^2 + \frac{\lambda}{2} \|Q\|_2^2  
\end{align*}
Let $\eta$ be the learning rate, the update on the user embedding $\mathbf{p}_u$ at each local optimization step is given by:
\begin{align} 
    \mathbf{p}_u^{(t + 1)} 
    =& \mathbf{p}_u^{(t)}(1 - \eta\lambda) +\eta Q^{(t)\top}(\mathbf{r} - \hat{\mathbf{r}}^{(t)}).
\end{align}
Let $\mathbf{m} \in \mathbb{R}^{\numItems}$ be a binary vector where $\mathbf{m}_\itm = 1$ if $\itm \in O_\usr$, then the item embedding matrix $Q$ is updated as follows:

\begin{equation}
\label{eq:update_Q}
    Q^{(t + 1)} = Q^{(t)} - \eta (\lambda Q^{(t)}  - \left(\mathbf{m} * (\mathbf{r}_u - \hat{\mathbf{r}}_u)\right) \mathbf{p}_u^{(t)\top})
\end{equation}
The update that is sent to the central server has the following formula,
\begin{align}
    \Delta_Q^{(t)} 
    = Q^{(t + 1)} - Q^{(t)} 
    = \eta \left[\left(\mathbf{m} * (\mathbf{r}_u - \hat{\mathbf{r}}_u)\right) \mathbf{p}_u^{(t)\top} - \lambda Q^{(t)} \right]
\end{align}
As we can see from equation \ref{eq:update_Q}, since each client only stores the presentation of only one user $\mathbf{p}_u$, the update on the item embedding matrix at each local step is the sum of a rank-1 matrix and a regularization component. 
Given that $\lambda$ is typically small, the low-rank component contributes most to the update $\Delta_Q^{(t)}$.
And if the direction of $\mathbf{p}_\usr$ does not change much during the local optimization phase, the update $\Delta_Q^{(t)}$ can stay low-rank.
From this observation, we first assume that the update of the item embedding matrix in training FedRec systems $\Delta_Q^{(t)}$ can be well approximated by a low-rank matrix.
We empirically verify this assumption by monitoring the effective rank of $\Delta_Q^{(t)}$ at each training round for different datasets.
The result is plotted in figure \ref{fig:pinterest_pca} where we plot the mean and standard deviation averaged over a set of participants in each round.

\begin{figure}[h]
    \centering
    \includegraphics[width=0.49\columnwidth]{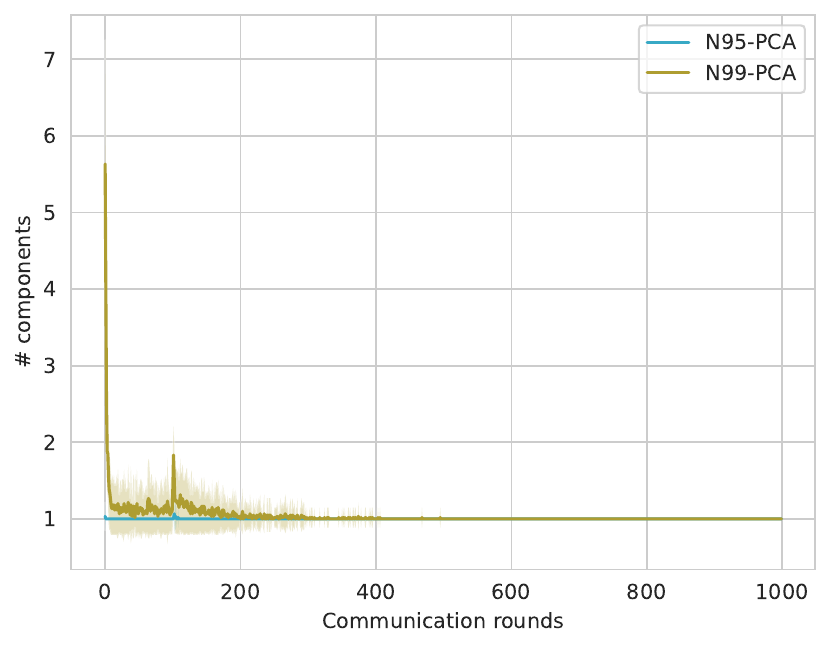}
    \includegraphics[width=0.49\columnwidth]{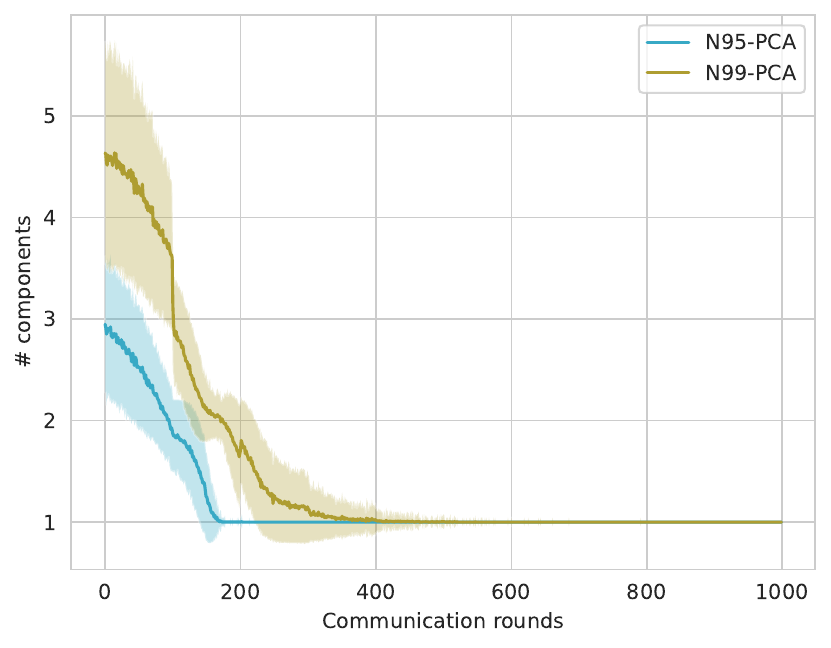}
    \caption{PCA components progression. The figures show the number of components that account for 99\% (N99-PCA in green) and 95\% (N95-PCA in blue) explained variance of all transfer item embedding matrix across communication rounds on the MovieLens-1M (left) and Pinterest (right) datasets. }
    \label{fig:pinterest_pca}
\end{figure}

This analysis suggests that reducing the communication by restricting the update to be low-rank might not sacrifice performance significantly. 
In the next section, we propose an efficient communication framework based on this motivation.
Since most of the transferred parameters in recommendation models are from the item embedding layers, we will focus on applying the proposed method for embedding layers in this work. 

\subsection{Low-rank Structure}
We propose explicitly enforcing a low-rank structure on the local update of the item embedding matrix $Q$. In particular, the local update $(\Delta_Q)_\usr^{(t)}$ is parameterized by a matrix product
\begin{equation*}
    (\Delta_Q)_\usr^{(t)} = B_u^{(t)} A_u^{(t)}
\end{equation*}
where $ B_u^{(t)} \in \mathbb{R}^{\embsz \times \lorarank}$ and $A_u^{(t)} \in \mathbb{R}^{\lorarank \times \numItems}$.
Given this parameterization, the embedding $\mathbf{q}_\itm$ of an item with index $\itm$ is given by
\begin{equation*}
    \mathbf{q}_i^{(t)} = \left(Q^{(t)} + B_u^{(t)} A_u^{(t)}\right) \mathbf{e}_i
\end{equation*}
where $\mathbf{e}_\itm \in \mathbb{R}^{\numItems}$ is a one-hot vector whose value at $\itm$-th is 1. 
This approach effectively saves a factor of $\frac{N\times \embsz }{N \times \lorarank + \embsz \times \lorarank}$ in communication since clients only need to send the much smaller matrices $A_u$ and $B_u$ to the central server. 

\subsection{Correlated Low-rank Structure Update}
Even though enforcing a low-rank structure on the update can greatly reduce the uplink communication size, doing aggregation and performing privacy-preserving is not trivial and faces the following three challenges: (1) the server needs to multiply out all the pairs $A_\usr^{(t)}$ and $B_\usr^{(t)}$ before performing the aggregation step; (2) the sum of low-rank solutions would typically leads to a larger rank update so there is no reducing footprint in the downlink communication; (3) secure aggregation method such as HE cannot directly apply to $A_\usr^{(t)}$ and $B_\usr^{(t)}$ since it will require to perform the multiplication between two encrypted matrices, which is much more costly than simple additive operation.

To reduce the downlink communication cost, we observe that if either $A_\usr^{(t)}$ or $B_\usr^{(t)}$ is identical between users and is fixed during the local training process, then the result of the aggregation step can be represented by a low-rank matrix with the following formulation:
\begin{equation*}
    \Delta_Q^{(t)} = B^{(t)}\left(\sum_{u \in S}A_u^{(t)}\right).
\end{equation*}
Notice that this aggregation is also compatible with HE since it only requires additive operations on a set of $A_\usr^{(t)}$ and clients can decrypt this result and then compute the global update $\Delta_Q^{(t)}$ at their local device.

Based on the above observation, we propose the Correlated Low-rank Structure Update (\arcronym) framework. In this framework, the server randomly initializes a matrix $B^{(t)}$ at the beginning of each training round and shares it among all participants. Participants then set $B_u^{(t)} = B^{(t)}$ and freeze this matrix during the local training phase and only optimize for $A_u^{(t)}$. The framework is presented in Algorithm \ref{algo:cols} and illustrated in Figure \ref{fig:main-method}. 
Note that the communication cost can be further reduced by sending only the random seed of the matrix $B^{(t)}$. \revise{A concurrent work \cite{anonymous2023improving} proposes FFA-LoRA which also fixes the randomly initialized non-zero matrices and only finetunes the zero-initialized matrices. They study FFA-LoRA in the context of federated fine-tuning LLMs and using differential privacy \cite{dwork2006dp} to provide privacy guarantees.}

\begin{algorithm}[ht]
    \DontPrintSemicolon
    \SetKwInput{Input}{Input}
    \SetAlgoLined
    \LinesNumbered
    \Input{Initial model $Q^{(0)}$; update rank $\lorarank$, a distribution $\mathcal{D}_B$ for initializing $B$; $\clientopt$, $\serveropt$ with learning rates $\lr, \slr$;}
    
     \For{$t \in \{0, 1, 2, \dots, T\}$ }{
      Sample a subset $\activeClients^{(t)}$ of clients\;
      Sample $B^{(t)} \sim \mathcal{D}_B$\;
      \For{{\bf client} $\usr \in \activeClients^{(t)}$ {\bf in parallel}}{
        \If{$t > 0$}{
            Download $A^{(t)}$\;
            Merge $Q_\usr^{(t)} = Q^{(t - 1)} + B^{(t - 1)} A^{(t)}$\;
        }
        Initialize $Q_\usr^{(t,0)} = Q^{(t)}$\;
        Download $B^{(t)}$ and Initialize $A_\usr^{(t,0)} = \boldsymbol{0}$\;
        Set trainable parameters $\theta_\usr^{(t,0)} = \{A_\usr^{(t,0)}, \mathbf{p}_\usr^{(t,0)}\}$\;
        \For{$k = 0, \dots, \localStep_\usr-1$}{
        
            Perform local update $\theta_\usr^{(t,k+1)} = \clientopt\left(\theta_\usr^{(t,k)}, \sgrad \mathcal{L}_\usr\left(\theta_\usr^{(t,k)}\right), \lr\right)$\;
        }
        $\mathbf{p}_\usr^{(t+1)} =  \mathbf{p}_\usr^{(t,\tau_\usr)}$\;
        Upload $\{A_\usr^{(t, \tau_\usr)}\}$ to the central server\;
        
      }
      Aggregate local changes
      \begin{equation*}
          A^{(t + 1)} = \sum_{\usr \in \activeClients^{(t)}} \dfrac{N_\usr}{N} A_\usr^{(t, \tau_\usr)};
      \end{equation*}
     }
     \caption{Correlated Low-rank Structure Update Matrix Factorization}
     \label{algo:cols}
\end{algorithm}

\paragraph{Differences w.r.t. SVD compression.}  We compare our method with SVD since it also uses a low-rank structure. The difference is that in \arcronym, participants directly optimize these models on the low-rank parameterization, while SVD only compresses the result from the local training step.

\subsection{Subsampling \fullname~(SCoLR)}
\label{sec:scolr}
In this section, we consider scenarios where edge devices establish communication with a central server using network connections that vary in quality. 
We propose a variant of \arcronym~termed Subsampling \fullname~(SCoLR) which allows each device to choose a unique local rank, denoted as $r_u$, aligning with their specific computational capacities and individual preferences throughout the training process. 

Let us denote $r_g$ as the rank of global update, which is sent from the server to participants through downlink connections, and $r_\usr$ as the rank of local update, which is sent from clients to the central server for aggregation through uplink connections.
In implementation, we set $r_g$ to be larger than $r_\usr$, reflecting that downlink bandwidth is often higher than uplink.
Given these rank parameters, at the start of each training round, the central server first initializes a matrix $B$ with the shape of $\mathbb{R}^{\embsz \times r_g}$. Then, participants in that round will download this matrix to their local devices and select a subset of columns of $B$ to perform the local optimization step. 
In particular, we demonstrate this process through the following formulation:
\begin{equation}
    (\Delta_Q)_\usr^{(t)} = BS_uA_u,
    \label{eq:lora_heter}
\end{equation}
where $B$ is a matrix with the shape of $\mathbb{R}^{\embsz \times r_g}$ and $A_u$ is a matrix with the shape of $\mathbb{R}^{r_\usr \times \numItems}$.
Specifically, $S_u$ is a binary matrix with $r_\usr$ rows and $r_g$ columns, where each row has exactly one non-zero element. The non-zero element in the $i$-th row is at the $j$-th column, where $j$ is the $i$-th element of a randomly shuffled array of integers from 1 to $r_g$. The detail is presented in Algorithm \ref{algo:algo-subcols}. 
Importantly, sharing the matrix $S_\usr$ does not divulge sensitive user information. Multiplying this matrix with $S_\usr$ is essentially a row reordering operation on the matrix $A_\usr$. As a result, we can effectively perform additive HE between pairs of rows from $A_{\usr_1}$ and $A_{\usr_2}$. This approach ensures privacy while accommodating varying network connections' quality among clients. 

\section{Experiments}

\subsection{Experimental Setup}
\begin{table}[!ht]
\caption{Statistics of the datasets used in evaluation.}
\small
\begin{center}
  \setlength\tabcolsep{4pt}
  \begin{tabular}{lrrrccc}
    \toprule
     \textbf{Datasets}  & \textbf{\# Users}  & \textbf{\# Items} & \textbf{\# Ratings}  & \textbf{Data Density}  \\
    \midrule
    MovieLens-1M~\cite{movielens_2015}  & $6,040$ & $3,706$ & $1,000,209$ & $4.47\%$ \\
    Pinterest~\cite{Geng2015pinterest}  & $55,187$ & $9,916$ & $1,500,809$ & $0.27\%$ \\
    \bottomrule
  \end{tabular}
  \end{center}
  \label{tab:datasets}
\end{table}

\paragraph{Datasets}
We experiment with two publicly available datasets, which are MovieLens-1M \cite{movielens_2015} and Pinterest \cite{Geng2015pinterest}. 
The statistics of these datasets are summarized in Table \ref{tab:datasets}.
We follow common practice in recommendation systems for preprocessing by retaining users with at least 20 interactions and converting numerical ratings into implicit feedback \cite{ammad2019federated, He2017ncf}. 

\paragraph{Evaluation Protocols}
We employ the standard leave-one-out evaluation to set up our test set \cite{He2017ncf}. For each user, we use all their interactions for training while holding out their last interaction for testing. During the testing phase, we randomly sampled 99 non-interacted items for each user and ranked the test item amongst these sampled items.

To evaluate the performance and verify the effectiveness of our model, we utilize two evaluation metrics, i.e., Hit Ratio (HR) and Normalized Discounted Cumulative Gain (NDCG), which are widely adopted for item ranking tasks. 
The above two metrics are usually truncated at a particular rank level (e.g. the first $k$ ranked items) to emphasize the importance of the first retrieved items.
Intuitively, the HR metric measures whether the test item is present on the top-$k$ ranked list, and the NDCG metric measures the ranking quality, which comprehensively considers both the positions of ratings and the ranking precision.

\paragraph{Models and Optimization.} For the base models, we adopt Matrix Factorization with the FedAvg learning algorithm, also used in \citet{chai2020secure}. In our experiments, the dimension of user and item embedding $\embsz$ is set to 64 for the MovieLens-1M dataset and 16 for the Pinterest dataset. This is based on our observation that increasing the embedding size on the Pinterest dataset leads to overfitting and decreased performance on the test set. This observation is also consistent with \citet{He2017ncf}. We use the simple SGD optimizer for local training at edge devices. 

\paragraph{Federated settings.} 
In each round, we sample $M$ clients uniformly randomly, without replacement in a given round and across rounds. Instead of performing $\tau_i$ steps of ClientOpt, we perform $E$ epochs of training over each client’s dataset. This is done because, in practical settings, clients have heterogeneous datasets of varying sizes. Thus, specifying a fixed number of steps can cause some clients to repeatedly train on the same examples, while certain clients only see a small fraction of their data.

\paragraph{Baselines.} We have conducted a comparison between our framework and the basic FedMF models, along with two compression methods: SVD and Top-K compression. The first method, which is SVD-based, returns a compressed update with a low-rank structure. The second method is based on sparsification, representing updates as sparse matrices to reduce the transfer size.

\paragraph{Hyper-parameter settings.} 
To determine hyper-parameters, we create a validation set from the training set by extracting the second last interaction of each user and tuning hyper-parameters on it. We tested the batch size of [32, 64, 128, 256], the learning rate of [0.5, 0.1, 0.05, 0.01], and weight decay in $[5e-4, 1e-4]$. For each dataset, we set the number of clients participating in each round to be equal to 1\% of the number of all users in that dataset. We also vary the local epochs in [1, 2, 4].
The number of aggregation epochs is set at 1000 for MovieLens-1M and 2000 for Pinterest as the training process is converged at these epochs.

\paragraph{Machine} The experiments were conducted on a machine equipped with an Intel(R) Xeon(R) W-1250 CPU @ 3.30GHz and a Quadro RTX 4000 GPU.

\begin{figure}
    \centering
    \includegraphics[width=0.9\columnwidth]{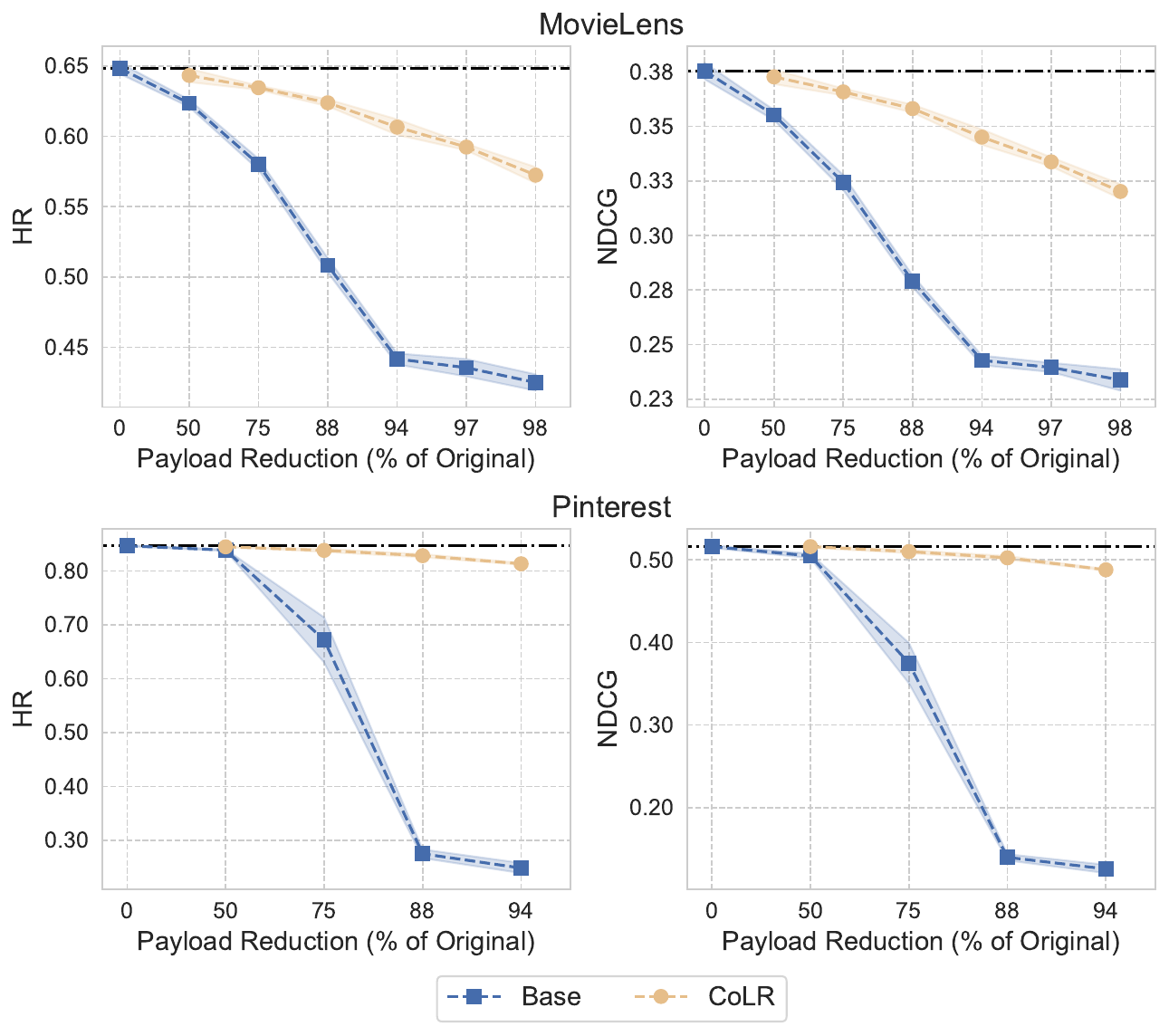} 
    \caption{Performance on the MovieLens-1M dataset (Top) and the Pinterest dataset (Bottom). We plot the utilities (HR and NDCG) versus the payload reduction and compare \arcronym\;with the base model with the same transfer size. Each point represents the average recommendation performance on the test set across five random seeds. The shaded areas denote the standard deviation over the mean. The dashed black line presents the largest base model's performance.}
    \label{fig:transfer-size-comparision}
\end{figure}

\subsection{Experimental Results}

\paragraph{\textbf{(1) \arcronym~can achieve comparable performances with the base models.}}
Given our primary focus is on recommendation performance within communication-limited environments, we commence our investigation by comparing the recommendation performance between \arcronym~ and the base model FedMF given the same communication budget. 
On the ML-1M dataset, we adjust the dimensions of user and item embeddings across the set [1, 2, 4, 8, 16, 32, 64] for FedMF while fixing the embedding size of \arcronym~ to 64, with different rank settings within [1, 2, 4, 8, 16, 32]. Similarly, for Pinterest, the embedding range for FedMF is [1, 2, 4, 8, 16], while \arcronym~ has an embedding size of 16 and ranks in the range of [1, 2, 4, 8]. Our settings lead to approximately equivalent transfer sizes for both methods in each dataset.

In Figure \ref{fig:transfer-size-comparision}, we present the HR and NDCG metrics across different transfer sizes.
With equal communication budgets, \arcronym~consistently outperforms the base models on both datasets. 
To illustrate, on the Pinterest dataset, even with an update size equates to 6.25\% of the largest model, \arcronym~achieves a notable performance (81.03\% HR and 48.50\% NDCG) compared to the base model (84.74\% HR and 51.79\% NDCG) while attaining a much larger reduction in terms of communication cost (16x). In contrast, the FedMF models with corresponding embedding sizes achieve much lower accuracies.
On the MovieLens-1M dataset, we also observe a similar pattern where
\arcronym~consistently demonstrates higher recommendation performance when compared to their counterparts.

The result from this experiment highlights that \arcronym~can achieve competitive performance when compared to the fully-trained model, FedMF while greatly reducing the cost of communication.

\paragraph{\textbf{(2) Comparison between \arcronym~and other compression-based methods.}}

\begin{figure}
    \centering
    \includegraphics[width=0.9\columnwidth]{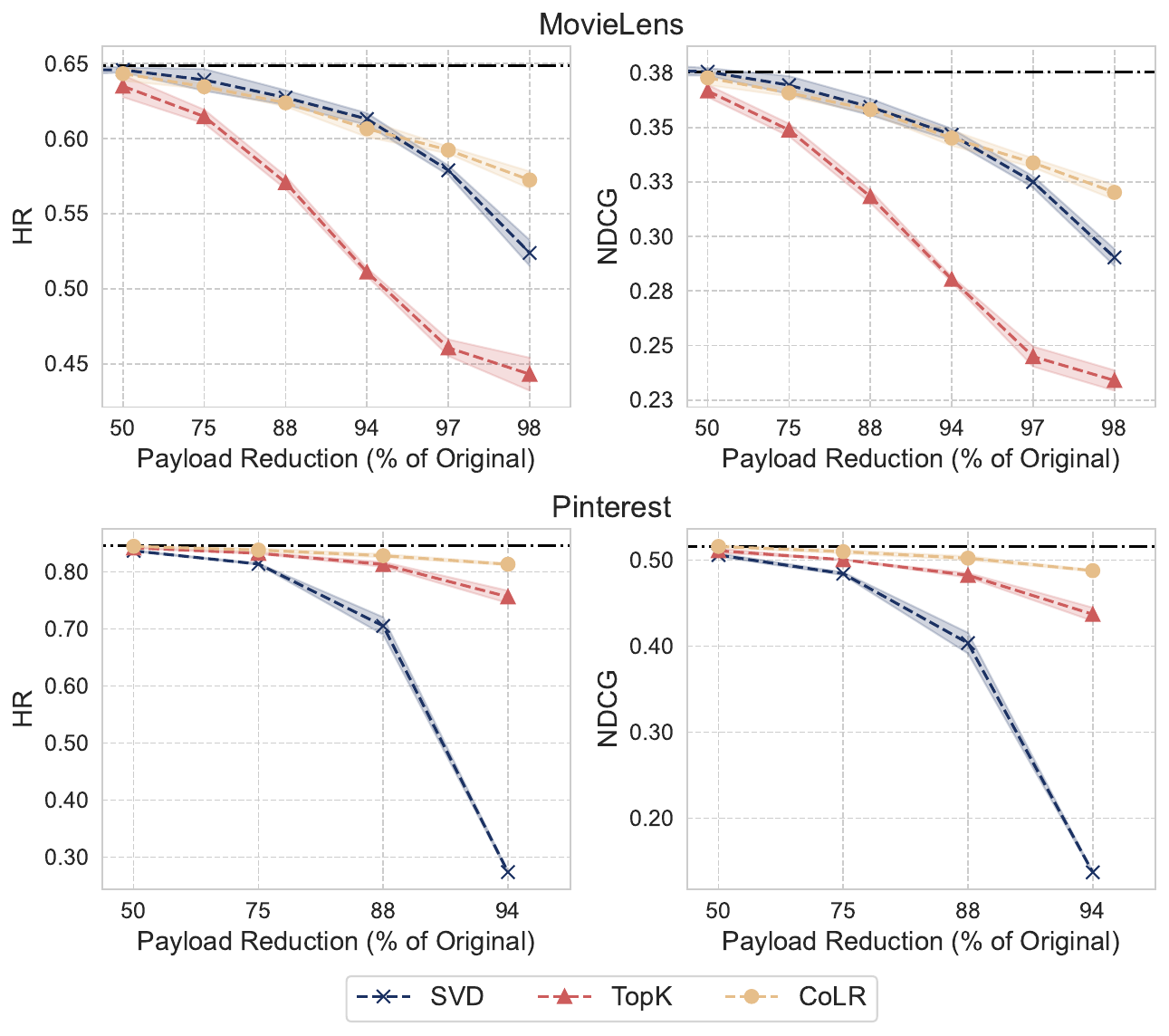} 
    \caption{\textbf{HR and NDCG on MovieLens-1M dataset (Top) and Pinterest Dataset (Bottom).} We plot the utilities versus the payload reduction and compare \arcronym~with other methods with the same payload reduction.  
    The dashed black line presents the base model's performance.
    }
    \label{fig:transfer-size-comparision-wothers}
\end{figure}

We conducted the above experiment employing two compression methods, SVD and top-K compression, with compression ratios matched to those of \arcronym. To ensure a fair evaluation, we applied the same compression ratio to both upload and download messages. The outcomes, depicted in Figure \ref{fig:transfer-size-comparision-wothers}, reveal that \arcronym~ consistently achieves favorable performance while outperforming other methods in scenarios with limited communication budgets. Notably, the performance of SVD and top-K compression varies across datasets. While SVD demonstrates favorable results with the MovieLens dataset, its performance substantially deteriorates with the Pinterest dataset. 

\begin{table}[h!]
\caption{Communication and training times for MovieLens-1M dataset, measured in minutes.}
\small
\begin{center}
\setlength\tabcolsep{4.5pt}
    \begin{tabular}{p{0.5in}p{0.8in}p{0.7in}p{0.7in}}
        \toprule
        \textbf{Method}  & \textbf{Communication time} (mins) & \textbf{Computation time}  (mins) & \textbf{Total Training Time} (mins)\\
        \midrule
        MF-64 & 80.43 & 169.07 & 249.50 \\
        \midrule
        \arcronym@1 & 1.26 & 169.18 & 170.43 \\
        \arcronym@2 & 2.51 & 169.21 & 171.72 \\ 
        \arcronym@4 & 5.03 & 169.27 & 174.30 \\
        \arcronym@8 & 10.05 & 169.29 & 179.34 \\
        \arcronym@16 & 20.11 & 169.30 & 189.41 \\
        \arcronym@32 & 40.21 & 169.38 & 209.60 \\
        \midrule
        SVD@1 & 1.26 & 169.49 & 170.75 \\
        SVD@2 & 2.51 & 169.50 & 172.02 \\ 
        SVD@4 & 5.03 & 169.53 & 174.55 \\
        SVD@8 & 10.05 & 169.59 & 179.65 \\
        SVD@16 & 20.11 & 169.64 & 189.74 \\
        SVD@32 & 40.21 & 169.60 & 209.82 \\
        \midrule
        Top-K@1 & 2.51 & 169.76 & 172.28 \\
        Top-K@2 & 5.03 & 169.79 & 174.81 \\
        Top-K@4 & 10.05 & 169.82 & 179.87 \\
        Top-K@8 & 20.11 & 169.92 & 190.03 \\ 
        Top-K@16 & 40.21 & 170.14 & 210.35 \\ 
        \bottomrule
    \end{tabular}
\end{center}
\label{tab:computation_time}
\end{table}

In the previous results, the evaluation of techniques focuses on the overall number of transmitted bits. Although this serves as a broad indicator, it fails to consider the time consumed by encoding/decoding processes and the fixed network latency within the system. When these time delays significantly exceed the per-bit communication time, employing compression techniques may offer limited or minimal benefits.
In the following, we do an analysis to understand the effects of
using \arcronym~and compression methods in training FedRec models.

We follow the model from \cite{wang2021fieldguide} to estimate the communication efficiency of deploying methods to real-world systems.
The execution time per round when deploying an optimization algorithm $\mathcal{A}$ in a cross-device FL system is estimated as follows, 
\begin{align*}
& T_{\text {round }}(\mathcal{A})=T_{\text {comm }}(\mathcal{A})+T_{\text {comp }}(\mathcal{A}), \\
& T_{\text {comm }}(\mathcal{A})=\frac{S_{\text {down }}(\mathcal{A})}{B_{\text {down }}}+\frac{S_{\text {up }}(\mathcal{A})}{B_{\text {up }}} \\
& T_{\text {comp }}(\mathcal{A})=\max _{j \in \mathcal{D}_{\text {round }}} T_{\text {client }}^j+T_{\text {server }}(\mathcal{A}), \\
& T_{\text {client }}^j(\mathcal{A})=R_{\text {comp }} T_{\text {sim }}^j(\mathcal{A})+C_{\text {comp }} \\
&
\end{align*}
where client download size $S_{\text down}(\mathcal{A})$, upload size $S_{\text up}(\mathcal{A})$, server computation time $T_{\text {server }}$, and client computation time $T_{\text {client }}^j$ depend on model and algorithm $\mathcal{A}$.  
Simulation time $T_{\text {server }}$ and $T_{\text {client }}^j$ can be estimated from FL simulation in our machine. We get the estimation of parameters $(B_{\text {down }}, B_{\text {up }}), R_{\text {comp }}, C_{\text {comp }}$ from \citet{wang2021fieldguide}.

\begin{align*}
&B_{\text {down }} \sim 0.75 \mathrm{MB} / \mathrm{secs}, B_{\mathrm{up}} \sim 0.25 \mathrm{MB} / \mathrm{secs}, \\
&R_{\mathrm{comp}} \sim 7, \text { and } C_{\text {comp }} \sim 10 \text { secs. }
\end{align*}

Table \ref{tab:computation_time} presents our estimation in terms of communication times and computation time. Notice that \arcronym~adds smaller overheads to the computation time while still greatly reducing the communication cost.

\begin{table*}[h!]
  \caption{Overheads, and Communication ratios for MovieLens-1M dataset; Comm Ratio is calculated by file sizes of Ciphertext over file sizes of Plaintext.}
  \label{tab:he_results}
\small
\begin{center}
  \begin{tabular}{lrrrrrr}
    \toprule \textbf{ Method} & \textbf{Client overheads}  & \textbf{Server overheads} & \textbf{Ciphertext size} & \textbf{Plaintext size} & \textbf{Comm Ratio}\\
    \midrule
      FedMF  & 0.93 s & 2.39 s & 24,587 KB & 927 KB & 26.52\\
      \midrule
      FedMF w/ Top-K@1/64& 88.20 s                 & 88.06 s                & 3,028 KB  & 29 KB  & 103.09  \\
      FedMF w/ Top-K@2/64& 182.02 s              & 185.59 s               & 6,056 KB  & 58 KB  & 103.83 \\
      FedMF w/ Top-K@4/64& 353.25 s              & 364.67 s               & 12,112 KB & 116 KB  & 104.20 \\
      FedMF w/ Top-K@8/64& 723.45 s               & 750.98 s               & 24,225 KB & 232 KB & 104.40 \\
      FedMF w/ Top-K@16/64& 1449.90 s              & 1483.91 s            & 48,448 KB & 464 KB & 104.49 \\
      \cmidrule{1-6}
      FedMF w/ \arcronym@1                & 0.07 s                & 0.24 s                & 3,073 KB  & 15 KB  & 206.31 \\
      FedMF w/ \arcronym@2                & 0.07 s                & 0.25 s                & 3,073 KB  & 29 KB  & 104.63 \\
      FedMF w/ \arcronym@4                & 0.07 s                & 0.25 s                 & 3,073 KB  & 58 KB  & 52.69  \\
      FedMF w/ \arcronym@8                & 0.08 s                & 0.25 s                & 3,073 KB  & 116 KB & 26.44  \\
      FedMF w/ \arcronym@16               & 0.15 s                & 0.51 s                & 6,147 KB  & 232 KB & 26.49  \\
      FedMF w/ \arcronym@32               & 0.30 s                & 1.03 s                & 12,293 KB & 464 KB & 26.51  \\
    \bottomrule
  \end{tabular}
  \end{center}
  \label{tab:results-lora-he-en}
\end{table*}

\paragraph{\textbf{(3) \arcronym~is compatible with HE}}

In this section, we argue that tackling privacy and communication efficiency as separate concerns can result in suboptimal solutions and point out the limitation in applying SVD and Top-K compression on HE-based FedRec systems.

Since performing SVD decomposition on an encrypted matrix remains an open problem, we conduct tests using two communication efficient methods: \arcronym~and Top-K. These tests are carried out under identical configurations, encompassing local updates and the number of clients involved in training rounds. The setup entails the utilization of the CKKS Cryptosystem \cite{cheon2017ckks} for our \arcronym~method, while the Top-K method employs the Paillier cryptosystem \cite{paillier1999} for encryption, decryption, and aggregation in place of the top-K vector. 
The detailed implementation is described in Appendix \ref{sec:he_implementation}. Table \ref{tab:he_results} displays the client and server overheads in seconds, as well as the size of the ciphertext and plaintext.

Table \ref{tab:he_results} shows that CoLR can reduce client and server overheads by up to 
$3\text{-}10\times$. 
For Top-K compression, when the value of $k$ doubles (i.e., doubling the top-K vector's size), the operation time for both client-side and server-side operations also doubles, as it mandates operations on each value within the vector. Throughout the experiment, \arcronym~ consistently outperforms the Top-K method across compression ratios, exhibiting lower time overheads on both the client and server sides.
In terms of ciphertext sizes, the Top-K compression method with Paillier encryption demands encryption for each value within the top-K vector. Consequently, whenever the size of the top-K vector doubles, the ciphertext size also doubles. 
In contrast, as previously explained, our scheme produced at most $\lceil \frac{n}{8096} \rceil$ blocks of ciphertext, with the ciphertext size not doubling each time $k$ doubles. This phenomenon illustrates why, in several cases, the ciphertext size remains consistent even as the plaintext size increases. With lower payload reductions aimed at achieving greater recommendation performance, our scheme demonstrates smaller ciphertext sizes, offering a reduction in bandwidth consumption.

\subsection{Heterogeneous network bandwidth}

In this section, we evaluate our proposed method SCoLR and explore the scenario where each client can dynamically select $\lorarank_u$ value during each training round $t$. This scenario reflects real-world federated learning, where clients often showcase differences in communication capacities, as exemplified in \cite{fan2021oort, li2020fedchallenges}. It becomes inefficient to impose a uniform communication budget on all clients within this heterogeneous context, as some devices may not be able to harness their network connections fully.

For this experiment, we set the global rank $r_g$ of SCoLR in the list of $\{2, 4, 8, 16, 32, 64\}$ and uniformly sample the local rank $r_\usr$ such that $1 \le r_\usr \le r_g$. It's crucial to emphasize that $r_\usr$ is independently sampled for each user and may differ from one round to the next. This configuration mirrors a practical scenario where the available resources of a specific user may undergo substantial variations at different time points during the training phase. We present the result on the MovieLens-1M dataset in Table \ref{tab:results-scolr-hetero}. 
We compare S\arcronym~with the base models and \arcronym~in Figure \ref{fig:scolr-results}.
This result demonstrates that SCoLR is effective under the device heterogeneity setting since it can match the performance of \arcronym~under the same uplink communication budget. 

\begin{figure}[h!]
    \centering
    \includegraphics[width=0.9\columnwidth]{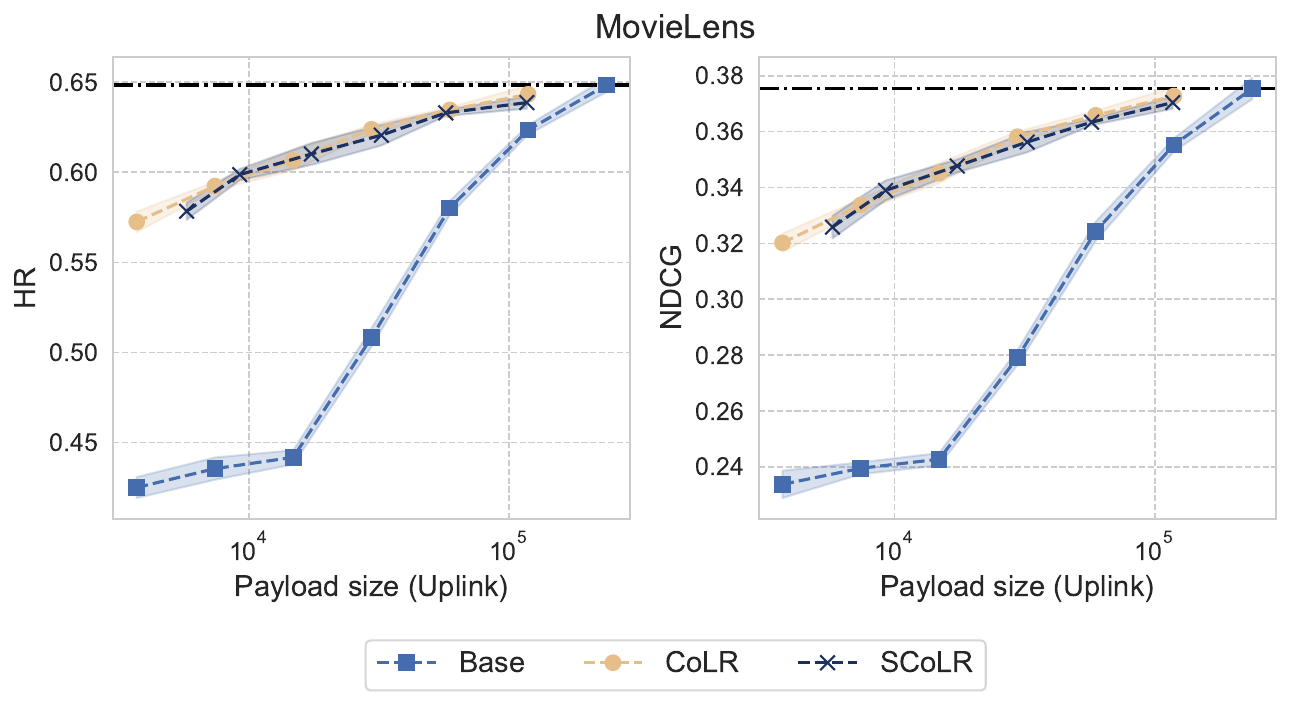} 
    \caption{\textbf{Performance of SCOLR on MovieLens-1M dataset.} We plot the utilities versus the payload size.  
    The dashed black line is the base model's performance.
    }
    \label{fig:scolr-results}
\end{figure}

\section{Conclusion}

In this work, we propose Correlated Low-rank Structure update (CoLR), a framework that enhances communication efficiency and privacy preservation in FedRec by leveraging the inherent low-rank structure in updating transfers, our method reduces communication overheads.
\arcronym~also benefits from the CKKS cryptosystem, which allows the implementation of a secured aggregation strategy within FedRec.
With minimal computational overheads and bandwidth-heterogeneity awareness, it offers a flexible and efficient means to address the challenges of federated learning. For future research, we see several exciting directions. First, our framework still involves a central server, we would like to test how our methods can be effectively adapted to a fully decentralized, peer-2-peer communication setting \cite{zhang2020, lyu2022}. Secondly, investigating methods to handle dynamic network conditions and straggler mitigation in real-world settings will be crucial. Lastly, expanding our approach to accommodate more advanced secure aggregation techniques for reduced server-side computational costs and extending its compatibility with various encryption protocols can further enhance its utility in privacy-sensitive applications.

\begin{acks}
This research was funded by Vingroup Innovation Foundation (VINIF) under project code VINIF.2022.DA00087
\end{acks}

\bibliographystyle{ACM-Reference-Format}
\balance
\bibliography{sample-base}

\appendix
\section{Algorithm Details}
In Section \ref{sec:scolr}, we presented SCoLR to address the bandwidth heterogeneity problem. We provide the detail of this method in Algorithm \ref{algo:algo-subcols} and the detail experimental results in Table \ref{tab:results-scolr-hetero}.

\begin{algorithm}[ht]
    \DontPrintSemicolon
    \SetKwInput{Input}{Input}
    \SetAlgoLined
    \LinesNumbered
\Input{Initial model $Q^{(0)}$; global update rank $\lorarank_g$, local update rank $\{\lorarank_\usr\}$, a distribution $\mathcal{D}_B$ for initializing $B$; $\clientopt$, $\serveropt$ with learning rates $\lr, \slr$;}
    
     \For{$t \in \{0, 1, 2, \dots, T\}$ }{
      Sample a subset $\activeClients^{(t)}$ of clients and $B^{(t)} \sim \mathcal{D}_B$\;
      \For{{\bf client} $\usr \in \activeClients^{(t)}$ {\bf in parallel}}{
        \If{$t > 0$}{
            Download $A^{(t)}$\ and merge $Q_\usr^{(t,0)} = Q^{(t - 1)} + B^{(t - 1)} A^{(t)}$\;
        }
        Initialize $Q_\usr^{(t,0)} = Q^{(t)}$\;
        Download $B^{(t)}$, Initialize $A_\usr^{(t,0)} = \boldsymbol{0}$, and  sample $S_\usr^{(t)}$\;
        Set trainable parameters $\theta_\usr^{(t,0)} = \{A_\usr^{(t,0)}, \mathbf{p}_\usr^{(t,0)}\}$\;
        \For{$k = 0, \dots, \localStep_\usr-1$}{
        
            Perform local update $\theta_\usr^{(t,k+1)} = \clientopt\left(\theta_\usr^{(t,k)}, \sgrad_{\theta_\usr} \mathcal{L}_\usr(\theta_\usr^{(t,k)}), \lr\right)$\;
        }
        $\mathbf{p}_\usr^{(t+1)} =  \mathbf{p}_\usr^{(t,\tau_\usr)}$\;
        Upload $\{S_\usr^{(t)}, A_\usr^{(t, \tau_\usr)}\}$ to the central server\;
        
      }
      Aggregate local changes
      \begin{equation*}
          A^{(t + 1)} = \sum_{\usr \in \activeClients^{(t)}} \dfrac{N_\usr}{N} S_\usr^{(t)} A_\usr^{(t, \tau_\usr)};
      \end{equation*}
     }
     \caption{Subsampling \fullname~(SCoLR)}
     \label{algo:algo-subcols}
\end{algorithm}

\begin{table}[!hbt]
\caption{HR, NDCG of SCoLR algorithm on the MovieLens-1M dataset under computation/device heterogeneity settings.}
\small
\begin{center}
   \begin{tabular}{p{0.8in}p{0.6in}lll}
       \toprule
       \textbf{Global rank}  & \textbf{Local rank}  & \textbf{HR} & \textbf{NDCG}\\
       \midrule
       64 & $1-64$ & 63.86 & 37.04\\
       32 & $1-32$ & 63.29 & 36.34\\
       16 & $1-16$ & 62.06 & 35.63\\
       8 & $1-8$ & 61.02 & 34.77\\
       4 & $1-4$ & 59.87 & 33.89\\
       2 & $1-2$ & 57.84 & 32.58\\
       \bottomrule
   \end{tabular}
\end{center}
\label{tab:results-scolr-hetero}
\end{table}
\section{Experimental Details}
We plot the convergence speed of four methods on the Pinterest dataset in Figure \ref{fig_app:pinterest_train_hist}.

\begin{figure}[h!]
    \centering
    \includegraphics[width=1.0\columnwidth]{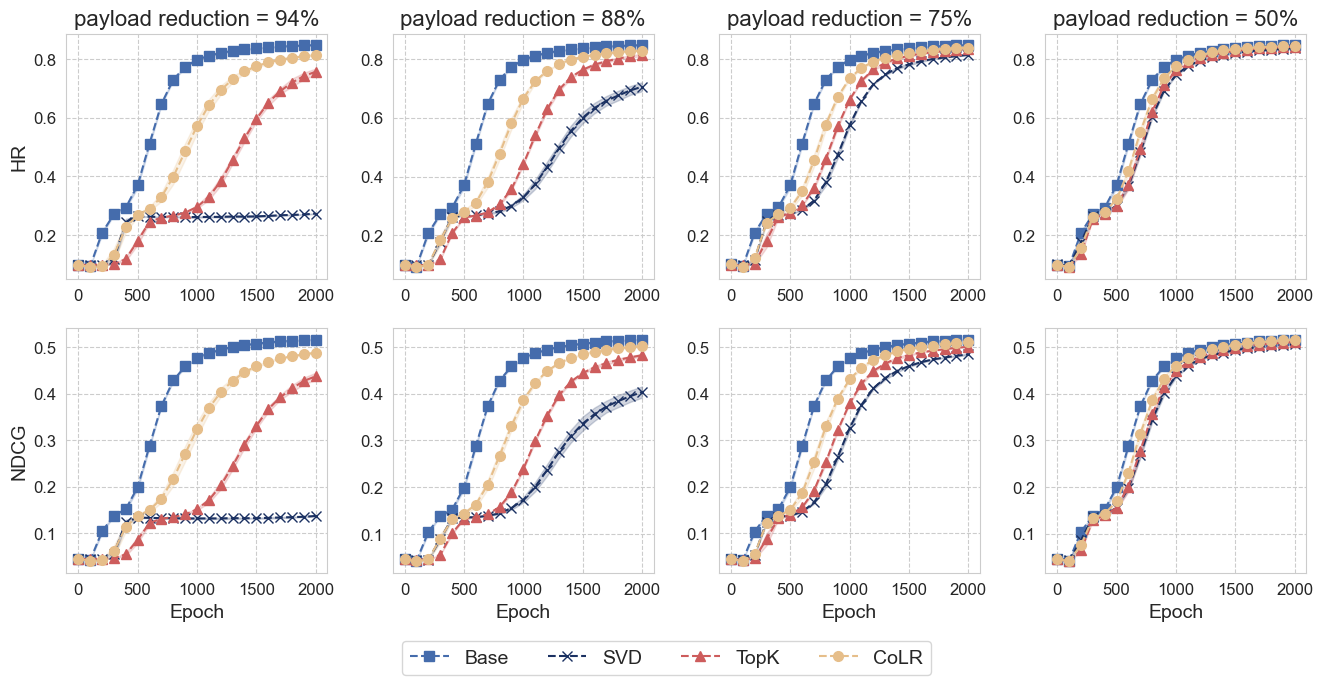} 
    \caption{Average HR and NDCG on Pinterest dataset varies as communication rounds.
    }
    \label{fig_app:pinterest_train_hist}
\end{figure}
\section{Homomorphic Encryption with compressors and \arcronym}
\label{sec:he_implementation}

\textit{Limitation of applying HE with SVD Compression:}
The SVD method requires matrix multiplication on encrypted matrices U, S, and V derived from local clients' updates. There are several research endeavors aimed at providing efficient algorithms for applying homomorphic encryption in this context, specifically using the CKKS cryptosystem \cite{jiang2018hematrixmul}. However, a limitation of this method is that the dimension of the matrix must be in the form of $2^n$, often necessitating additional padding on the original matrices to achieve this form, particularly in the case of larger dimension matrices. Additionally, to reduce the size of global updates sent from the server to clients, additional SVD decompositions are required. Performing SVD decomposition on an encrypted matrix by known schemes remains an open problem, resulting in high downlink bandwidth consumption.

\textit{Limitation of applying HE with TopK Compression:}
The TopK compression method necessitates in-place homomorphic operations, a characteristic not compatible with the CKKS scheme, designed to perform homomorphic encryption on tensors. As an alternative, we have employed the Paillier cryptosystem \cite{paillier1999}, a partially homomorphic encryption scheme capable of encrypting individual numbers. While Paillier allows for the implementation of the FedAvg aggregation strategy, it requires the secured aggregation process on the server must be executed on each element in the TopK vector. Consequently, increasing the value of $K$ results in higher operational costs for encryption, decryption, and secured aggregation.

\textit{Implementation of HE with \arcronym:} Our \arcronym~method leverages the inherent efficiency of the CKKS cryptosystem, which can execute operations on multiple values as a vector. 
For a flattened vector of size $n$, both clients and the server need to perform operations on at most $\lceil \frac{n}{8096} \rceil$ blocks.

\section{An analysis on the initialization of the matrix B}
\label{sec:error_analysis}
If each client performs only one GD step locally then $B$ can be seen as the projection matrix and $B \mathbf{a}_i$ is the projection of the update of item $i$ on the subspace spanned by columns of $B$.
We denote the error of the update on each item embedding $\itm$ by $\epsilon_i$ which has the following formulation:
\begin{equation}
   \epsilon_\itm = \mathbb{E}_{B} \left[ \left\| \bar{\Delta}_Q - \frac{1}{|S|} \left(\sum_{u \in S}B_u \mathbf{a}_\usr\right) \right\|_2^2 \right].
\end{equation}
We analyze the effect of different initialization of $B$ on this error.
First, we state the proposition \ref{lem:bound_error_general} which gives an upper bound on the error $\epsilon_i$.

\begin{proposition}[Upper bound the error]
If $B_\usr$ is independently generated between users and are chosen from a distribution $\mathcal{B}$ that satisfies:
\begin{enumerate}
    \item Bounded operator norm: $\mathbb{E}\left[\|B\|^2\right] \le L_\mathcal{B}$ 
    \item Bounded bias: $\|\mathbb{E} B_{\usr} B_{\usr}^\top \bar{\mathbf{p}}_{\usr} - \bar{\mathbf{p}}_{\usr} \|_2 \le \sqrt{\delta_\mathcal{B}}$
\end{enumerate}
Then,
    \begin{align}
   \epsilon_\itm &= \mathbb{E}_{B} \left[ \left\| \bar{\Delta}_Q - \frac{1}{|S|} \left(\sum_{u \in S}B_u \mathbf{a}_\usr\right) \right\|_2^2 \right] \\ 
   &\le \frac{1}{|S|} C_\mathbf{p}^2 \delta_\mathcal{B} + \frac{1}{|S|} \max_{\usr \in S} \alpha_u \|\mathbf{p}_u\|_2^2  \left( L_B^2 + 1 \right).
\end{align}
\label{lem:bound_error_general}
\end{proposition}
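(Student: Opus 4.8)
The plan is to convert $\epsilon_\itm$ into the expected squared norm of a sum of \emph{independent} per-user projection residuals, and then split that into a bias term (controlled by hypothesis~(2)) and a variance term (controlled by hypothesis~(1)).

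First I would use the single-local-step assumption to identify $\mathbf a_\usr$ explicitly. Since plain-SGD training starts from $A_\usr^{(t,0)}=\mathbf 0$ and $A_\usr$ enters the embedding only through $Q^{(t)}+B_\usr A_\usr$, the chain rule gives $\sgrad_{A_\usr}\mathcal L_\usr = B_\usr^\top\sgrad_Q\mathcal L_\usr$, so one gradient step produces, for item $\itm$, the column $\mathbf a_\usr = B_\usr^\top \mathbf d_\usr$, where $\mathbf d_\usr$ is the $\itm$-th column of the ordinary full-rank update $(\Delta_Q)_\usr$ derived in Section~\ref{sec:motivation}. Hence $B_\usr\mathbf a_\usr = B_\usr B_\usr^\top\mathbf d_\usr$, and with $\bar\Delta_Q=\tfrac1{|S|}\sum_{\usr\in S}\mathbf d_\usr$ the argument of the norm collapses to $\tfrac1{|S|}\sum_{\usr\in S}(I-B_\usr B_\usr^\top)\mathbf d_\usr$. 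Writing $\mathbf e_\usr:=(I-B_\usr B_\usr^\top)\mathbf d_\usr$ gives $\epsilon_\itm=\tfrac1{|S|^2}\,\E\big\|\sum_{\usr\in S}\mathbf e_\usr\big\|^2$. I would also record from the rank-one-plus-regularization form of $(\Delta_Q)_\usr$ that, up to the small $\lambda$-term, $\mathbf d_\usr = \sqrt{\alpha_\usr}\,\|\mathbf p_\usr\|\,\bar{\mathbf p}_\usr$ with $\bar{\mathbf p}_\usr=\mathbf p_\usr/\|\mathbf p_\usr\|$, so that $\|\mathbf d_\usr\|^2=\alpha_\usr\|\mathbf p_\usr\|^2$ and $C_{\mathbf p}^2:=\sum_{\usr\in S}\alpha_\usr\|\mathbf p_\usr\|^2$.

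Next, because the $B_\usr$ are independent across users, I would center each residual, $\mathbf e_\usr = \E\mathbf e_\usr + (\mathbf e_\usr-\E\mathbf e_\usr)$; expanding the square and using that the centered parts are independent and mean-zero kills all cross terms, giving the bias--variance split $\E\|\sum_\usr\mathbf e_\usr\|^2 = \|\sum_\usr\E\mathbf e_\usr\|^2 + \sum_\usr\E\|\mathbf e_\usr-\E\mathbf e_\usr\|^2 \le \|\sum_\usr\E\mathbf e_\usr\|^2 + \sum_\usr\E\|\mathbf e_\usr\|^2$. For the bias, $\E\mathbf e_\usr=(I-\E[B_\usr B_\usr^\top])\mathbf d_\usr$, and substituting $\mathbf d_\usr\propto\bar{\mathbf p}_\usr$ and applying hypothesis~(2) yields $\|\E\mathbf e_\usr\|\le\sqrt{\alpha_\usr}\,\|\mathbf p_\usr\|\sqrt{\delta_{\mathcal B}}$; Cauchy--Schwarz on the sum then gives $\|\sum_\usr\E\mathbf e_\usr\|^2\le|S|\sum_\usr\|\E\mathbf e_\usr\|^2\le |S|\,\delta_{\mathcal B}\,C_{\mathbf p}^2$, which after the $1/|S|^2$ prefactor becomes the first term $\tfrac1{|S|}C_{\mathbf p}^2\delta_{\mathcal B}$.

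Finally, for the variance I would expand $\|(I-B_\usr B_\usr^\top)\mathbf d_\usr\|^2 = \|\mathbf d_\usr\|^2 - 2\|B_\usr^\top\mathbf d_\usr\|^2 + \|B_\usr B_\usr^\top\mathbf d_\usr\|^2$, drop the nonpositive middle term, and bound the last by $\|B_\usr\|^4\|\mathbf d_\usr\|^2$, so that hypothesis~(1) controls the expectation; summing with $\sum_\usr(\cdot)\le|S|\max_\usr(\cdot)$ and $\|\mathbf d_\usr\|^2=\alpha_\usr\|\mathbf p_\usr\|^2$ gives, after the $1/|S|^2$ prefactor, the second term $\tfrac1{|S|}\max_{\usr\in S}\alpha_\usr\|\mathbf p_\usr\|^2\,(L_{\mathcal B}+1)$ (the $(L_B^2+1)$ of the statement). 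I expect two points to need care. The first is the clean reduction of the opening step, which genuinely relies on the one-local-step assumption and on absorbing the $\lambda$-regularization part of $\mathbf d_\usr$ into the direction $\bar{\mathbf p}_\usr$ and scale $\alpha_\usr$. The second, and the real obstacle, is landing the variance factor on $(L_{\mathcal B}+1)$ rather than on a quartic moment $\E\|B_\usr\|^4$: this needs the columns of $B_\usr$ to be (sub)normalized so that $\|B_\usr\|^4\le\|B_\usr\|^2$ and hence $\E\|B_\usr\|^4\le L_{\mathcal B}$, an assumption implicit in the admissible distributions $\mathcal B$.
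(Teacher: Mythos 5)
Your proof is correct and follows essentially the same route as the paper's: both reduce $\epsilon_\itm$ to $\frac{1}{|S|^2}\,\mathbb{E}\bigl\|\sum_{u\in S}\alpha_u\bigl(B_u B_u^\top\mathbf{p}_u-\mathbf{p}_u\bigr)\bigr\|_2^2$ (your one-GD-step derivation of $\mathbf{a}_u=B_u^\top\mathbf{d}_u$ is exactly the projection interpretation the paper states without proof), use independence of the $B_u$ together with hypothesis~(2) on the cross/bias part, and bound the diagonal part by expanding $\|B_uB_u^\top\bar{\mathbf{p}}_u\|_2^2+1-2\|B_u^\top\bar{\mathbf{p}}_u\|_2^2$, dropping the negative term, and invoking hypothesis~(1). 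Your only structural difference is to center the residuals and use a bias--variance split plus Cauchy--Schwarz, where the paper expands the off-diagonal sum directly and bounds each inner product by the product of the two biases; this is cosmetic, though your bookkeeping is in fact tighter: it yields the $\frac{1}{|S|}C_{\mathbf{p}}^2\delta_{\mathcal{B}}$ term cleanly (with $C_{\mathbf{p}}^2=\sum_{u}\alpha_u\|\mathbf{p}_u\|_2^2$), whereas the paper has $|S|(|S|-1)$ off-diagonal terms each bounded by $C_{\mathbf{p}}^2\delta_{\mathcal{B}}$, so its stated $1/|S|$ prefactor, like its silent replacement of $\alpha_u^2$ by $\alpha_u$ in the diagonal sum, is left unjustified and is repaired by your reading of $\alpha_u$ as the squared residual. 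On the obstacle you flag: you are right that the quartic moment is the genuine weak point, and the paper shares it --- it writes $\mathbb{E}\|B_uB_u^\top\|^2\le L_{\mathcal{B}}^2$, i.e.\ $\mathbb{E}\|B_u\|^4\le L_{\mathcal{B}}^2$, which does not follow from $\mathbb{E}\|B_u\|^2\le L_{\mathcal{B}}$ since Jensen gives the opposite inequality. However, your proposed repair (subnormalizing so that $\|B_u\|\le 1$, hence $\|B_u\|^4\le\|B_u\|^2$) is incompatible with the paper's own Gaussian initialization lemma, where $\mathbb{E}\|B\|^2\approx d/r\gg 1$; the reading consistent with the rest of the appendix is to interpret hypothesis~(1) as an almost-sure bound $\|B\|^2\le L_{\mathcal{B}}$, or to strengthen it to the fourth-moment condition $\mathbb{E}\|B\|^4\le L_{\mathcal{B}}^2$, which holds up to constants for the Gaussian ensemble by concentration of the operator norm.
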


\begin{proof}
    Assume $B_\usr$ is independently generated between users, we have
\begin{align*}
    \epsilon_\itm 
    &= \frac{1}{|S|^2} \mathbb{E}_{B} \left[ \left\| \sum_{u \in S} (r_{\usr\itm}- \hat{r}_{\usr\itm}) \left( B_\usr B_\usr^\top \mathbf{p}_\usr - \mathbf{p}_\usr \right) \right\|_2^2 \right]\\
    &= \frac{1}{|S|^2} \mathbb{E}_{B} \left[ \left\| \sum_{u \in S} \alpha_u \left( B_\usr B_\usr^\top \mathbf{p}_\usr - \mathbf{p}_\usr \right) \right\|_2^2 \right]\\
    &= \frac{1}{|S|^2} \sum_{\usr_1 \in S} \sum_{\usr_2 \ne \usr_1} \alpha_{\usr_1} \alpha_{\usr_2} \mathbb{E}_{B} \left< B_{\usr_1} B_{\usr_1}^\top \mathbf{p}_{\usr_1} - \mathbf{p}_{\usr_1},  B_{\usr_2} B_{\usr_2}^\top \mathbf{p}_{\usr_2} - \mathbf{p}_{\usr_2} \right>\\
    &+ \frac{1}{|S|^2}  \sum_{u \in S}  \alpha^2_{\usr} \mathbb{E}_{B} \left[  \left\| B_\usr B_\usr^\top \mathbf{p}_\usr - \mathbf{p}_\usr \right\|_2^2 \right]
\end{align*}
If $B_u$ are independently chosen from a distribution $\mathcal{B}$ that satisfies:
\begin{enumerate}
    \item Bounded operator norm: $\mathbb{E}\left[\|B\|^2\right] \le L_\mathcal{B}$ 
    \item Bounded bias: $\|\mathbb{E} B_{\usr} B_{\usr}^\top \bar{\mathbf{p}}_{\usr} - \bar{\mathbf{p}}_{\usr} \|_2 \le \sqrt{\delta_\mathcal{B}}$
\end{enumerate}

We have 
\begin{align}
    &\mathbb{E}_{B} \left< B_{\usr_1} B_{\usr_1}^\top \mathbf{p}_{\usr_1} - \mathbf{p}_{\usr_1},  B_{\usr_2} B_{\usr_2}^\top \mathbf{p}_{\usr_2} - \mathbf{p}_{\usr_2} \right> \nonumber \\
    &= \left\| \mathbf{p}_{\usr_1} \right\| \left\| \mathbf{p}_{\usr_2} \right\| \mathbb{E}_{B} \left< B_{\usr_1} B_{\usr_1}^\top \bar{\mathbf{p}}_{\usr_1} - \bar{\mathbf{p}}_{\usr_1},  B_{\usr_2} B_{\usr_2}^\top \bar{\mathbf{p}}_{\usr_2} - \bar{\mathbf{p}}_{\usr_2} \right>   \nonumber \\
    &= \left\| \mathbf{p}_{\usr_1} \right\| \left\| \mathbf{p}_{\usr_2} \right\| \left< \mathbb{E} B_{\usr_1} B_{\usr_1}^\top \bar{\mathbf{p}}_{\usr_1} - \bar{\mathbf{p}}_{\usr_1},  \mathbb{E} B_{\usr_2} B_{\usr_2}^\top \bar{\mathbf{p}}_{\usr_2} - \bar{\mathbf{p}}_{\usr_2} \right>\\ \label{eq:ana_b_inde}
    &\le \left\| \mathbf{p}_{\usr_1} \right\| \left\| \mathbf{p}_{\usr_2} \right\| \|\mathbb{E}  B_{\usr_1} B_{\usr_1}^\top \bar{\mathbf{p}}_{\usr_1} - \bar{\mathbf{p}}_{\usr_1}\|_2 \| \mathbb{E} B_{\usr_2} B_{\usr_2}^\top \bar{\mathbf{p}}_{\usr_2} - \bar{\mathbf{p}}_{\usr_2} \|_2 \nonumber  \\
    &\le C_\mathbf{p}^2 \delta_\mathcal{B}
\end{align}
where (\ref{eq:ana_b_inde}) follows since $B_u$ are independently sampled between users.
The second term is 
\begin{align*}
    & \frac{1}{|S|^2}  \sum_{u \in S}  \alpha^2_{\usr} \mathbb{E}_{B} \left[  \left\| B_\usr B_\usr^\top \mathbf{p}_\usr - \mathbf{p}_\usr \right\|_2^2 \right] \\
    &= \frac{1}{|S|^2}  \sum_{u \in S}  \alpha^2_{\usr} \|\mathbf{p}_u\|_2^2 \mathbb{E}_{B} \left[  \left\| B_\usr B_\usr^\top \bar{\mathbf{p}}_\usr - \bar{\mathbf{p}}_\usr \right\|_2^2 \right] \\
    &= \frac{1}{|S|^2}  \sum_{u \in S} \alpha_u \|\mathbf{p}_u\|_2^2 \mathbb{E}_{B} \left[  \left\| B_\usr B_\usr^\top \bar{\mathbf{p}}_\usr \|_2^2 + \|\bar{\mathbf{p}}_\usr \right\|_2^2 - 2 \bar{\mathbf{p}}_\usr^\top B_\usr B_\usr^\top \bar{\mathbf{p}}_\usr \right]\\
    &= \frac{1}{|S|^2}  \sum_{u \in S} \alpha_u \|\mathbf{p}_u\|_2^2 \mathbb{E}_{B} \left[  \left\| B_\usr B_\usr^\top \bar{\mathbf{p}} \right\|_2^2 + 1 - 2 \left\| B_\usr^\top \bar{\mathbf{p}} \right\|_2^2 \right] \\
    &\le \frac{1}{|S|} \max_{\usr \in S} \alpha_u \|\mathbf{p}_u\|_2^2  \left( \mathbb{E}_{B} \left[  \left\| B_\usr B_\usr^\top \right\|_2^2 \right] + 1 \right)  \\
    &\le \frac{1}{|S|} \max_{\usr \in S} \alpha_u \|\mathbf{p}_u\|_2^2  \left( L_B^2 + 1 \right)
    \end{align*}
\end{proof}

Next, we bound the bias and the operator norm of $B_\usr$ if it is sampled from a Gaussian distribution in the lemma \ref{lem:gaussian}.
\begin{lemma}[Gaussian Initialization]
    Let $r < d$. Consider $B \in \mathbb{R}^{d \times r}$ be sampled from the Gaussian distribution where $B$ has i.i.d. $\mathcal{N}(0, 1/k)$ entries and a fixed unit vector $\mathbf{v} \in \mathbf{R}^{d}$. Then 
    \begin{enumerate}
        \item Bounded operator norm:
            \begin{equation*}
                \mathbb{E} \| B \|^2 \le \frac{d}{r} \left( 1 + O\left(\sqrt{\frac{r}{d}}\right)\right)
            \end{equation*}
        \item  Unbias: for every unit vector $\mathbf{v} \in \mathbb{R}^{d}$
        \begin{equation*}
            \left\| \mathbb{E} B B^\top \mathbf{v} - \mathbf{v} \right\| = 0
        \end{equation*}
    \end{enumerate}
    \label{lem:gaussian}
\end{lemma}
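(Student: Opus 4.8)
The two claims are logically independent, and the unbiasedness claim (2) is a direct second-moment computation, so I would dispatch it first. Writing $B = (B_{ik})$ with i.i.d.\ $\mathcal{N}(0,1/r)$ entries (the $k$ in the statement being the rank $r$), the $(i,j)$ entry of $\mathbb{E}[BB^\top]$ equals $\sum_{k=1}^{r}\mathbb{E}[B_{ik}B_{jk}]$. For $i \ne j$ the two factors are independent and mean-zero, so the sum is $0$; for $i=j$ each summand is the per-entry variance $1/r$, and summing over the $r$ columns gives $1$. Hence $\mathbb{E}[BB^\top] = I_d$, so $\mathbb{E}[BB^\top]\mathbf{v} = \mathbf{v}$ and $\|\mathbb{E} BB^\top \mathbf{v} - \mathbf{v}\| = 0$ for every unit vector $\mathbf{v}$, which is exactly (2). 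Note that it is precisely the variance scaling $1/r$ that forces the map to be unbiased, which is why the entry variance must be the reciprocal of the rank.

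For the operator-norm bound (1), the plan is to rescale to a standard Gaussian matrix and then invoke two classical facts. Write $B = \tfrac{1}{\sqrt{r}}G$ where $G \in \mathbb{R}^{d\times r}$ has i.i.d.\ $\mathcal{N}(0,1)$ entries, so that $\|B\|^2 = \tfrac{1}{r}\,s_1(G)^2$ with $s_1(G)$ the largest singular value. The first fact is Gordon's bound (via the Slepian--Fernique comparison) on the expected spectral norm of a rectangular Gaussian matrix,
\begin{equation*}
    \mathbb{E}[s_1(G)] \le \sqrt{d} + \sqrt{r}.
\end{equation*}
A naive Jensen step would only control $(\mathbb{E}[s_1(G)])^2$, which is the wrong direction; the crux is therefore to pass to the second moment. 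For this I would use the decomposition $\mathbb{E}[s_1(G)^2] = \mathrm{Var}(s_1(G)) + (\mathbb{E}[s_1(G)])^2$ together with the observation that $G \mapsto s_1(G)$ is $1$-Lipschitz with respect to the Frobenius norm, since $|s_1(G)-s_1(G')| \le \|G-G'\| \le \|G-G'\|_F$. By Gaussian concentration (equivalently the Gaussian Poincar\'e inequality), the variance of any $1$-Lipschitz function of i.i.d.\ standard Gaussians is at most $1$, giving $\mathrm{Var}(s_1(G)) \le 1$.

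Combining the two facts yields $\mathbb{E}[s_1(G)^2] \le 1 + (\sqrt{d}+\sqrt{r})^2 = d + r + 2\sqrt{dr} + 1$, and dividing by $r$ gives
\begin{equation*}
    \mathbb{E}[\|B\|^2] \le \frac{d}{r} + 1 + 2\sqrt{\frac{d}{r}} + \frac{1}{r} = \frac{d}{r}\left(1 + \frac{r}{d} + 2\sqrt{\frac{r}{d}} + \frac{1}{d}\right).
\end{equation*}
Finally, using $r < d$ and $r,d \ge 1$, one checks that each of $r/d$, $2\sqrt{r/d}$, and $1/d$ is $O(\sqrt{r/d})$ (the first because $\sqrt{r/d}<1$, the third because $rd \ge 1$), so the parenthesis collapses to $1 + O(\sqrt{r/d})$, matching the claimed bound. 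The only genuinely delicate step is this second-moment control: the Lipschitz-concentration variance bound is what upgrades Gordon's first-moment estimate into the required bound on $\mathbb{E}\|B\|^2$, and everything else is bookkeeping.
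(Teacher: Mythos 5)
Your proof is correct, and it is both different from and more complete than what the paper actually provides. For the unbiasedness claim (2), the paper argues via rotational symmetry: it picks a rotation $P$ with $P\mathbf{v} = \mathbf{e}_1$, uses invariance of the Gaussian ensemble to reduce to computing $\mathbb{E}\bigl[B'B'^{\top}\mathbf{e}_1\bigr]$ coordinatewise, and identifies the first coordinate as $1/r$ times a chi-square variable with $r$ degrees of freedom. Your direct entrywise computation showing $\mathbb{E}[BB^\top] = I_d$ reaches the same conclusion more economically: it avoids the rotation machinery altogether, handles all unit vectors $\mathbf{v}$ simultaneously, and makes transparent your correct observation that the $1/r$ entry variance (you rightly read the statement's ``$1/k$'' as a typo for $1/r$, consistent with the paper's own proof) is exactly what forces $\mathbb{E}[BB^\top]$ to be the identity. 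More significantly, the paper gives \emph{no proof at all} of the operator-norm bound (1) --- its proof environment covers only claim (2) --- whereas you supply a complete argument: Gordon's bound $\mathbb{E}[s_1(G)] \le \sqrt{d} + \sqrt{r}$ for the rescaled standard Gaussian matrix, upgraded to a second-moment bound via the $1$-Lipschitz property of $s_1$ with respect to the Frobenius norm and the Gaussian Poincar\'e inequality, which gives $\mathrm{Var}(s_1(G)) \le 1$ and hence $\mathbb{E}\|B\|^2 \le \tfrac{1}{r}\bigl(1 + (\sqrt{d}+\sqrt{r})^2\bigr)$. Your final bookkeeping is also sound: with $r < d$ and $rd \ge 1$, each of $r/d$, $2\sqrt{r/d}$, and $1/d$ is indeed $O(\sqrt{r/d})$, matching the stated $\tfrac{d}{r}\bigl(1 + O(\sqrt{r/d})\bigr)$. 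You correctly identified the one genuinely delicate point --- that Jensen runs the wrong way, so the variance control is what converts the first-moment estimate into the required bound on $\mathbb{E}\|B\|^2$ --- and your proposal thereby fills a real gap in the paper's appendix.
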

\begin{proof}
    Let $B' = P B $ where $P \in \mathbb{R}^{d \times d}$ is the rotation matrix such that $P\mathbf{v} = \mathbf{e}_1$. Due to the rotational symmetry of the normal distribution, $B'$ is a random matrix with i.i.d. $\mathcal{N}(0,1/r)$ entries. Note that $B = P^\top B'$.
    \begin{align*}
        \mathbb{E}_B \left[ B B^\top \mathbf{v} \right] &= \mathbb{E}_B \left[ P^\top P B B^\top P^\top P \mathbf{v} \right] \\
        &=  P^\top \mathbb{E}_B \left[B' B'^{\top} \mathbf{e}_1 \right]
    \end{align*}
    Let $\mathbf{z} = B' B'^{\top} \mathbf{e}_1$. Notice that $\mathbf{z}_j = \left< B'^\top \mathbf{e}_j, B'^\top \mathbf{e}_1 \right>$. Because $B'$ has i.i.d. $\mathcal{N}(0, 1/r)$ entries,
    $\mathbf{z}_1 = \| B'^\top \mathbf{e}_1 \|_2^2 = \sum_{k = 1}^r (B'_{1k})^2$ is $1 / r$ times a Chi-square random variable with $r$ degrees of freedom. So $\mathbb{E}[\mathbf{z}_1] =  \frac{1}{r} r = 1$ and $\mathbb{E}[\mathbf{z}_j] = 0 \forall j > 1$. Thus, $\mathbb{E}[\mathbf{z}] = \mathbf{e}_1$. Therefore,  $\left\| \mathbb{E}_B \left[ B B^\top \mathbf{v} \right] \right\| = \left\| P^\top \mathbf{e}_1 - \mathbf{v}\right\| = 0.$
\end{proof}

From Proposition \ref{lem:bound_error_general} and Lemma \ref{lem:gaussian}, we can directly get the following theorem which bould the error of restricting the local update in a low-rank subspace which is randomly sampled from a normal distribution. 
\begin{theorem}
    Assume $B_\usr$ is independently generated between users and are chosen from the normal distribution $\mathcal{N}(0, 1 / \lorarank)$.
    Then,
    \begin{align*}
   \epsilon_\itm &= \mathbb{E}_{B} \left[ \left\| \bar{\Delta}_Q - \frac{1}{|S|} \left(\sum_{u \in S}B_u \mathbf{a}_\usr\right) \right\|_2^2 \right] \\ 
   &\le \frac{1}{|S|} \max_{\usr \in S} \alpha_u \|\mathbf{p}_u\|_2^2 ~O\left(\frac{d}{r}\right).
\end{align*}
\end{theorem}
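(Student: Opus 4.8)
The plan is to derive this theorem as an immediate corollary, by instantiating the general-purpose bound of Proposition~\ref{lem:bound_error_general} at the specific distribution $\mathcal{B} = \mathcal{N}(0, 1/\lorarank)$ and reading off the two distribution-dependent constants from Lemma~\ref{lem:gaussian}. Concretely, Proposition~\ref{lem:bound_error_general} already guarantees, for any $B_\usr$ drawn i.i.d.\ between users from a distribution satisfying the bounded-bias condition (with constant $\delta_\mathcal{B}$) and the bounded-operator-norm condition (with constant $L_\mathcal{B} = \mathbb{E}\|B\|^2$), the estimate
\[
\epsilon_\itm \le \frac{1}{|S|} C_\mathbf{p}^2 \delta_\mathcal{B} + \frac{1}{|S|} \max_{\usr \in S} \alpha_u \|\mathbf{p}_u\|_2^2 \left( L_\mathcal{B} + 1 \right).
\]
So the entire task reduces to checking that the Gaussian choice meets both hypotheses and then supplying the corresponding values of $\delta_\mathcal{B}$ and $L_\mathcal{B}$.

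First I would verify the bias hypothesis. Part~2 of Lemma~\ref{lem:gaussian} shows that for the isotropic Gaussian sketch $\|\mathbb{E} B B^\top \mathbf{v} - \mathbf{v}\| = 0$ for every unit vector $\mathbf{v}$, so the bias is \emph{exactly} zero and we may take $\delta_\mathcal{B} = 0$. This is the key simplification: it annihilates the entire cross-term contribution $\tfrac{1}{|S|} C_\mathbf{p}^2 \delta_\mathcal{B}$ coming from the interactions between distinct users' sketches, leaving only the diagonal variance term. Next I would invoke Part~1 of the same lemma to control the operator norm, giving $L_\mathcal{B} = \mathbb{E}\|B\|^2 \le \frac{d}{\lorarank}\bigl(1 + O(\sqrt{\lorarank/d})\bigr)$. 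Substituting these two values into the displayed Proposition bound immediately collapses it to $\tfrac{1}{|S|} \max_{\usr\in S}\alpha_u\|\mathbf{p}_u\|_2^2 \bigl(\tfrac{d}{\lorarank}(1+O(\sqrt{\lorarank/d})) + 1\bigr)$.

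The final step is routine asymptotic bookkeeping: since the regime of interest has $\lorarank < d$, the ratio $d/\lorarank > 1$ dominates both the additive constant $1$ and the lower-order correction $\tfrac{d}{\lorarank}O(\sqrt{\lorarank/d}) = O(\sqrt{d/\lorarank})$, so the bracketed factor is $O(d/\lorarank)$, which yields the claimed bound. I do not expect a genuine obstacle here, since both ingredient results are already established earlier in the excerpt; the only points requiring mild care are (i) confirming that the second-moment operator-norm quantity produced by Lemma~\ref{lem:gaussian} is exactly the one entering the second term of Proposition~\ref{lem:bound_error_general}, and (ii) the order-of-magnitude simplification that folds the correction and the additive constant into the single $O(d/\lorarank)$ factor using $\lorarank < d$.
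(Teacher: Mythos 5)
Your proposal coincides with the paper's own proof, which obtains the theorem in a single line by instantiating Proposition~\ref{lem:bound_error_general} with the Gaussian constants from Lemma~\ref{lem:gaussian}: the exact unbiasedness gives $\delta_\mathcal{B}=0$ and kills the cross-user term, and the operator-norm bound supplies the $O(d/\lorarank)$ factor after absorbing the additive $1$ and the $O(\sqrt{\lorarank/d})$ correction using $\lorarank < d$. The only point of friction is the one you yourself flag in (i): the proposition as printed carries the factor $\left(L_B^2+1\right)$, and your substitution $\left(L_\mathcal{B}+1\right)$ with $L_\mathcal{B}=\mathbb{E}\|B\|^2 \le \frac{d}{\lorarank}\left(1+O\left(\sqrt{\lorarank/d}\right)\right)$ is the reading under which the claimed $O(d/\lorarank)$ rate actually follows (taking $L_B^2$ literally as the square of the second-moment bound would only yield $O(d^2/\lorarank^2)$), which matches the paper's evident intent despite its loose notation in the proposition's final step.
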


This result demonstrates that the square error can increase for lower values of the local rank $\lorarank$. Building on this insight, we suggest scaling the learning rate of the low-rank components by $\sqrt{\frac{\lorarank}{d}}$ to counter the error.

\end{document}